\def\BibTeX{{\rm B\kern-.05em{\sc i\kern-.025em b}\kern-.08em
    T\kern-.1667em\lower.7ex\hbox{E}\kern-.125emX}}
\theoremstyle{plain}
\newtheorem{theorem}{Theorem}[section]
\newtheorem{lemma}[theorem]{Lemma}
\theoremstyle{definition}
\theoremstyle{remark}
\newcommand{\vect}[1]{\mathbf{#1}}
\newcommand{\normal}[0]{\mathcal{N}}
\DeclareMathOperator{\Prob}{\mathbb{P}}
\newcommand{\classifier}[1][]{f\ifx&#1&\else(#1)\fi}
\newcommand{\logit}[1][]{l\ifx&#1&\else(#1)\fi}
\newcommand{\classpred}[1][]{\eta\ifx&#1&\else(#1)\fi}
\newcommand{\ket}[1]{\lvert #1 \rangle}
\newcommand{\bra}[1]{\langle #1 \rvert }
\newcommand{\braket}[2]{\langle #1 \rvert #2 \rangle}
\providecommand{\customgenericname}{}
\newcommand{\newcustomtheorem}[2]{%
  \newenvironment{#1}[1]
  {%
   \renewcommand\customgenericname{#2}%
   \renewcommand\theinnercustomgeneric{##1}%
   \innercustomgeneric
  }
  {\endinnercustomgeneric}
}
\def\eqref#1{equation~\ref{#1}}
\def\1{\bm{1}}
\DeclareMathAlphabet{\mathsfit}{\encodingdefault}{\sfdefault}{m}{sl}
\SetMathAlphabet{\mathsfit}{bold}{\encodingdefault}{\sfdefault}{bx}{n}
\DeclarePairedDelimiterX{\infdivx}[2]{[}{]}{%
  #1\;\delimsize\|\;#2%
}
\begin{document}

\title{Discrete Randomized Smoothing Meets\\ Quantum Computing
\thanks{The project/research is supported by the Bavarian Ministry of Economic Affairs, Regional Development and Energy with funds from the Hightech Agenda Bayern.}
}

\author{
    \IEEEauthorblockN{Tom Wollschl\"ager\IEEEauthorrefmark{1}\IEEEauthorrefmark{3}, Aman Saxena\IEEEauthorrefmark{1}\IEEEauthorrefmark{3}, Nicola Franco\IEEEauthorrefmark{2}, Jeanette Miriam Lorenz\IEEEauthorrefmark{2}, Stephan G\"unnemann\IEEEauthorrefmark{3}}
    \IEEEauthorblockA{\IEEEauthorrefmark{3}School of Computation, Information \& Technology, Technical Univ. of Munich, Germany
    \\\{t.wollschlaeger, a.saxena, s.guennemann\}@tum.de}
    \IEEEauthorblockA{\IEEEauthorrefmark{2}Fraunhofer Institute for Cognitive Systems IKS, Munich, Germany
    \\\{nicola.franco, jeanette.miriam.lorenz\}@iks.fraunhofer.de}
    \IEEEauthorblockA{\IEEEauthorrefmark{1} denotes equal contribution}
}

\fancypagestyle{specialfooter}{%
  \fancyhf{}
  \renewcommand\headrulewidth{0pt}
  \fancyfoot[R]{ \noindent\fbox{%
    \parbox{\textwidth}{%
        {\footnotesize \copyright 2024 IEEE. Personal use of this material is permitted. Permission from IEEE must be obtained for all other uses, in any current or future media, including reprinting/republishing this material for advertising or promotional purposes, creating new collective works, for resale or redistribution to servers or lists, or reuse of any copyrighted component of this work in other works.}
        }
    }}
}

\maketitle
\thispagestyle{plain}
\pagestyle{plain}
\thispagestyle{specialfooter}

\begin{abstract}
Breakthroughs in machine learning (ML) and advances in quantum computing (QC) drive the interdisciplinary field of quantum machine learning to new levels. However, due to the susceptibility of ML models to adversarial attacks, practical use raises safety-critical concerns. Existing Randomized Smoothing (RS) certification methods for classical machine learning models are computationally intensive. In this paper, we propose the combination of QC and the concept of discrete randomized smoothing to speed up the stochastic certification of ML models for discrete data.  
We show how to encode all the perturbations of the input binary data in superposition and use Quantum Amplitude Estimation (QAE) to obtain a quadratic reduction in the number of calls to the model that are required compared to traditional randomized smoothing techniques. In addition, we propose a new binary threat model to allow for an extensive evaluation of our approach on images, graphs, and text. 
\end{abstract}

\begin{IEEEkeywords}
Quantum Machine Learning, Certifiable Robustness, Randomized Smoothing, Quantum Amplitude Estimation.
\end{IEEEkeywords}

\section{Introduction}
\label{introduction-motivation}
With machine learning (ML) excelling in various domains such as natural language processing~\cite{brown2020language, thoppilan2022lamda, openai2023gpt4}, computer vision~\cite{han2022survey, dhariwal2021diffusion}, and health care~\cite{rajkomar2018, esteva2019} the need for reliable models emerged. Many approaches are vulnerable to attacks--noise crafted by an adversary to fool the model. Subsequently, the study of adversarial robustness deals with quantifying the robustness of the models and producing more robust approaches in these domains \cite{gnanasambandam2021optical, geisler2024attacking}. One method of quantification is the robustness certificate. There, the so-called certified radius is computed for an input sample in which the classifier's prediction remains unchanged~\cite{katz2017reluplex}. To this end, one has to provide a provable certificate that no noisy addition of lower magnitude can fool the classifier. Hence, existing certification methods are often computationally intensive. 
Lately, stochastic certification approaches have been dominating the field of adversarial robustness~\cite{cohen2019}. A promising method for discrete data certification is discrete randomized smoothing \cite{bojchevski2023efficient}, which provides robustness guarantees against adversarial perturbations for black-box classifiers. The certificate is obtained by building a smooth classifier using a sparsity-aware distribution. However, it involves computing the model predictions at exponentially growing input data perturbations, rendering the smooth classifier's exact evaluation computationally intractable. Therefore, in practice, Monte Carlo (MC) based stochastic methods are employed to evaluate the probabilistic approximation of the smooth classifier. 

Quantum Computing (QC), on the other hand, has shown promise in providing speedups for specific tasks, such as factoring large numbers \cite{Shor_1997} and unstructured database search \cite{grover1996fast}, making it a compelling approach to explore certifiable robustness. 
Consequently, research interest has surged at the intersection of QC and adversarial robustness, focusing on two primary categories. The first is to use QC to analyze and improve the adversarial robustness of classical neural networks~\cite{franco2022quantum, franco2023efficient}, and the second is to examine the robustness of quantum machine learning models~\cite{du2021quantum, gong2022enhancing, robustencodingsOurs, rotaionNoise}.

This work focuses on stochastic certification and addresses the computational challenges associated with robustness certification for machine learning models on discrete data. 
We propose a QC-based certification that requires quadratically fewer calls to the base classifier than the classical MC method to achieve similar accuracy in evaluating the smooth classifier. We achieve this by reformulating the estimation of the discrete smooth classifier as counting the weighted solution to a search problem. We frame the search problem as finding the perturbations for which the classifier predicts the desired class, thus relating it to the Quantum Amplitude Estimation (QAE) algorithm \cite{Brassard2000QuantumAA}. Furthermore, we propose a new threat model for our approach and can certify any data representation against discrete perturbations. Instead of certifying models that operate on binary data, we can work on arbitrary data with discrete perturbations; i.e., attacks can perturb data to finitely many possible states. This enables us to evaluate the utility of our method in certifying machine learning algorithms for \textit{natural language processing}, \textit{image classification}, and \textit{graph substructure counting} against discrete attacks that were previously out of reach for quantum computing certification.

Our core contributions can be summarized as follows:\footnote{Find our code at: \href{https://www.cs.cit.tum.de/daml/quantum-randomized-smoothing}{cs.cit.tum.de/daml/quantum-randomized-smoothing}.}
\begin{itemize}
    \item We develop a quantum algorithm to evaluate the smooth classifier for discrete data, requiring quadratically fewer calls to the model.
    \item We develop an extension to the sparsity-aware discrete randomized smoothing framework to include a larger class of data representations.
    \item We demonstrate the effectiveness of the proposed method both theoretically and empirically, highlighting its potential impact on the robustness of quantum machine learning algorithms.
\end{itemize}

\section{Related Work}
\label{related_work}
\textbf{Randomized Smoothing} has become one of the most prominent concepts of certifiable robustness in machine learning. Introduced by \citet{cohen2019}, originating from differential privacy \cite{YANG2024103827}, its applicability to any black-box model and strong certification performance sparked many variants. Examples include the use of confidence scores \cite{kumar2020certifying}, 
or localized smoothing improving robustness certificates for multi-output models \cite{schuchardt2023localized}. \citet{lee2020tight} introduce discrete smoothing and \citet{bojchevski2023efficient} extend the concept to sparse settings.

\textbf{QC and Adversarial Robustness} Recently, the intersection of QC and adversarial robustness has attracted considerable attention and has become a compelling area of research in the scientific community. 
To accelerate robustness verification for classic neural networks \citet{franco2022quantum, franco2023efficient}, decompose a mixed-integer program into a quadratic unconstrained optimization problem and use it to determine whether the prediction changes within a given adversarial budget. Other works directly investigate the robustness of QML models \cite{quantum_noise_protects, QHT_robustness, randomized_encodings}. Specifically, \citet{QHT_robustness} used the quantum hypothesis testing (QHT) framework to derive certification guarantees for QML models. On the other hand, \citet{quantum_noise_protects} developed a framework in which they add quantum noise to show differential privacy and eventually establish robustness guarantees. 

\textbf{QC and Monte Carlo} \citet{miyamoto2021bermudan} demonstrated the effectiveness of QAE as a quadratically faster alternative to MC estimation and has shown its application in finance. Based on this, \citet{Sahdev2022} proposed the application of QAE to evaluate the continuous smooth classifier, as defined in \cref{eq:rsdef_app}, and reiterated quadratic speed-up. However, any estimate of the smooth classifier must serve as a guaranteed lower bound to the actual value. In the context of continuous randomized smoothing, estimating the approximation of the actual expectation invariably involves some form of discretization. Although \citet{Sahdev2022} achieved a lower bound to the approximation, there was no guarantee that it remained a lower bound to the exact value. In addition, the accuracy of the estimate depends on the chosen discretization~\cite{franco2024quadratic}. Contrary, in this work, we avoid the need to analyze discretization errors by introducing a discrete framework.

\section{Background}
\label{background}
In this section, we introduce topics from the domain of adversarial robustness and quantum computing that are crucial for our discussion henceforth. We provide only a basic overview of these topics. %

\subsection{Certifiable Robustness}
\label{certifiable_robustness}

\textbf{Notation}
We denote the $N$ dimensional Euclidean space $\smash{\mathbb R ^{N}}$ as $\smash{\mathcal X^{(N)}}$ and the binary data space as $\smash{\mathcal X_D^{(N)}}$, that is, $\smash{\mathcal X_D^{(N)} := \{0,1\}^N}$. To simplify notation, we omit $(N)$ where possible. We express the binary classification task as a two-step process: we first map the input to the logit space using $\smash{\logit: \mathcal X \mapsto [0,1]}$, followed by mapping the logits to the class prediction via $\smash{\classpred: [0,1] \mapsto \{0,1\}}$, e.g., $\smash{\classpred(l):= \mathbf 1[l > 0.5]}$.
The binary classifier $\smash{\classifier: \mathcal X \mapsto \{0,1\}}$ is defined as $\smash{\classifier := \classpred \circ \logit}$.

\textbf{Problem Definition}
The research area of certifiable (provable) robustness aims to establish formal guarantees that---under all admissible attacks within the budget that does not change the semantic meaning of the data---none will alter the model's prediction. 
However, the process of examining the semantic similarity of the data is difficult\footnote{Identifying two semantically similar objects is the classification task.} and task-dependent \cite{gosch2023revisiting}. In the image domain, this difficulty is addressed by assuming that two inputs are semantically similar if they have an $L_p$ norm less than  $\epsilon$. Therefore, we can write that a model is robust for prediction $\vect x$ and attack budget $\epsilon$ if: 
\begin{equation}
\label{eq:2:certi}
f(\vect {\tilde x}) = f(\vect x) \text{ , } \forall \vect{\tilde x} \in \mathcal X\text{, s.t. }\lvert\lvert \vect{\tilde x} - \vect{x} \rvert\rvert_p \leq \epsilon  
\end{equation}
Robustness verification for neural networks can be formulated as a mixed-integer linear program (MILP) based on the neural network's weights~\cite{tjeng2019evaluating}. Unfortunately, the verification of properties in neural networks with ReLU activation is shown to be NP complete \cite{katz2017reluplex}. To avoid the computational complexity, approximations can be employed \cite{gowal2019}, which, however, potentially leads to loose guarantees. That is, it might not certify perturbation strengths that do not affect the model predictions. Randomized smoothing \cite{cohen2019} has gained significant attention among relaxed certification methods due to its ability to assess robustness without requiring knowledge of the classifier. %

\subsection{Randomized Smoothing}
Randomized smoothing as proposed by \citet{cohen2019} constructs a smooth classifier, denoted as $g$, evaluating the original classifier $\classifier$ in the vicinity of the input:
\begin{equation}
    \label{eq:rsdef_app}
    \begin{aligned}
        g(\vect x) = \mathbb E_{\vect z \sim \normal(0, \sigma^2 I)}[f(\vect{x} + \vect z)]
    \end{aligned}
\end{equation}
 Therefore, instead of the prediction at a data point $\vect x$, the expected prediction is computed for noisy samples generated by a normal distribution centered at $\vect x$. %
 The objective is to find $\mathcal L (\mathcal{B}_p(\epsilon)) := \min_{\delta \in \mathcal{B}_p(\epsilon)} g(\vect{x} + \bm{\delta})$ and evaluate if $\mathcal L (\mathcal{B}_p(\epsilon)) > \frac{1}{2}$, where $\smash{\mathcal B_p(\epsilon): = \{\bm{\delta} \in \mathcal X | \; \lvert\lvert \bm{\delta} \rvert\rvert_p \leq \epsilon\}}$. In other words, we verify if the prediction for all perturbations $\bm\delta$ is larger than 0.5, which corresponds to remaining the same. 
 To solve the optimization problem efficiently, one can evaluate the lower bound of $\smash{\mathcal L (\mathcal{B}_p(\epsilon)) \geq \mathcal L (\mathcal F, \mathcal{B}_p(\epsilon))}$ by finding the minimum value for the worst-case classifier as follows \cite{Zhang2020BlackBoxCW}:
\begin{equation}
    \label{beygauss:lower_app}
    \begin{aligned}
        \mathcal L (\mathcal F, \mathcal{B}_p(\epsilon)) := &\min_{\substack{\bm{\delta} \in \mathcal{B}_p(\epsilon) \\ h \in \mathcal F} } \mathbb E_{\vect z \sim \normal(0, \sigma^2 I)}[h(\vect x + \bm{\delta} + \vect z)] \\
        &\text{s.t.} \quad  \mathbb E_{\vect z \sim \normal(0, \sigma^2 I)} [h(\vect x + \bm{\delta})] = g(\vect x)
    \end{aligned}
\end{equation}
Here, $\mathcal F := \{f| f: \mathcal X \mapsto [0,1]\}$ is the set of classifiers. $\Phi$ denotes the CDF of the standard normal distribution $\normal(0,1)$, and $p_{\vect x}^*$ is the lower bound of $g(\vect x)$. We can certify that $g$ is certifiably robust if $\epsilon < \sigma\Phi^{-1}(p_{\vect x}^*)$ \footnote{Here it is for a $1D$ classifier, i.e, $\mathcal X = \mathbb R$} \cite{cohen2019, Zhang2020BlackBoxCW}. We show the resulting worst-case classifier, $h^* \in \mathcal F$ leading to this guarantee in \cref{fig:worstcase_app}. It is a linear classifier with all the points predicting class $1$ clustered behind a linear decision boundary such that $\mathbb E_{\vect z \sim \normal(0, \sigma^2 I)}[h^*(\vect x + \vect z)] = 
 g(\vect x)$. The worst-case smooth classifier will predict class $1$ until its mean $h^*(\vect x + \epsilon)$ is shifted to this boundary at $\vect x + \sigma \Phi^{-1}(g(\vect x))$, at which point both classes will be predicted with equal probability. Hence, any lower bound $p_{\vect x}^*$ of $g(\vect x)$, $\epsilon < \sigma\Phi^{-1}(p_{\vect x}^*)$ guarantees robustness. 

\begin{figure}[t]
\vskip 0.2in
\begin{center}
\centerline{\includegraphics[width=.9\columnwidth]{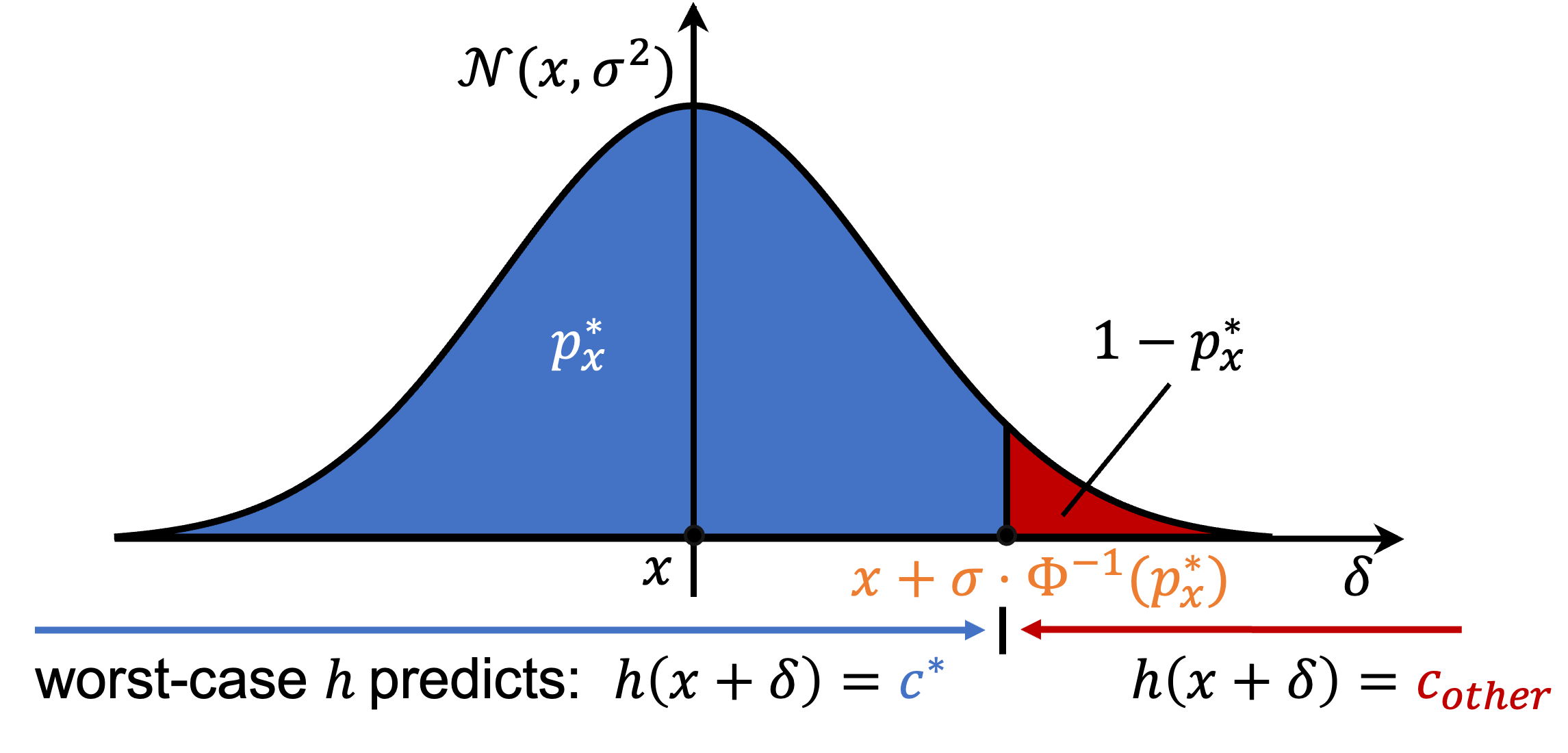}}
\caption{Example of the worst-case classifier in one dimension; $c^* = 1$ and $c_{\text{other}} = 0$. In this figure $h(.)$ implies $h^*(.)$.}
\label{fig:worstcase_app}
\end{center}
\vskip -0.2in
\end{figure}

\subsection{Discrete Randomized Smoothing}
\label{subsec:drs}
In many fields, we encounter discrete data for which continuous noise is not applicable. 
To address this challenge, \citet{bojchevski2023efficient} introduced an efficient randomized smoothing-based certification algorithm tailored for all forms of discrete data. They introduced a sparsity-aware smoothing scheme as a means of constructing a smooth classifier, which can then be efficiently certified. For simplicity, we describe the method for binary data, although the concepts can be extended to general discrete data.

For given numbers of additions and deletions, $r_a$, $r_d$, one defines the set of semantically similar graphs as those obtained by adding and deleting less than $r_a$ and $r_d$ edges respectively.  For any binary data, this set is formally defined as:
\begin{equation}
    \label{eq:pertball_disc}
\begin{aligned}
    \mathcal{B}_{r_a, r_d}(\vect x):= \{\vect {\tilde x} \in \mathcal X_D \text{ s.t. } & \sum_{\vect{\tilde x}_i  = \vect x_i - 1} 1 \le r_d 
    & \sum_{\vect{\tilde x}_i  = \vect x_i + 1} 1 \le r_a 
    \}
\end{aligned}
\end{equation}
Notably, the only possible perturbation on the binary data is to flip the bits, and consequentially for smoothing one can model the perturbation on each component as a data-dependent Bernoulli distribution. More precisely, each dimension of the discrete data $\tilde{\vect x}_i$ given the input data $\vect x$ is perturbed as in \cref{eq:drs_dist}, where each bit is flipped with probability $p_-$ if the actual value is $1$ and $p_+$ otherwise. Formally, this writes:
\begin{equation}
\label{eq:drs_dist}
\tilde{\vect x}_i| \vect x \sim \text{Ber}(p_{+}\cdot\mathbb I (\vect x_i=0) + (1 - p_{-})\cdot\mathbb I (\vect x_i=1)).
\end{equation}
We obtain the joint Probability Mass Function (PMF)
\footnote{often while writing we skip $p_+$ and $p_-$} as:
\begin{equation}
    \label{eq:sparsity_aware}
    \phi(\vect{\tilde x}|x; p_+, p_-) = \prod_{j=0}^{n-1} (P_{F}^{(j)})^{\mathbb I(\vect{\tilde x}_{j} \ne \vect {x}_{j})} (1-P_{F}^{(j)})^{\mathbb I(\vect{\tilde x}_{j} = \vect x_{j})}.
\end{equation}
Here $P_F^{(j)} := p_{+}\cdot \mathbb I (\vect x_j=0) + p_{-}\cdot \mathbb I (\vect x_j=1)$ is the probability of a bit-flip. Contrary to Gaussian smoothing in the continuous Euclidean domain, we define $g:\mathcal X \mapsto [0,1]$ as:
\begin{equation}
    \label{eq:drs_smoothclass}
    \begin{aligned}
                g(\vect x) &=  \Prob [f(\vect{\tilde{x}} ) = 1] \quad \text{for}\quad \vect{\tilde x} \sim \phi(.|\vect x)\\
        &= \sum_{f(\vect{\tilde x}) = 1} \phi(\vect{\tilde x}|\vect x).
    \end{aligned}
\end{equation}
Although the evaluation of the discrete smooth classifier for binary data is computable, it becomes intractable with increasing dimension of the data space. Therefore, stochastic methods such as the MC method with Clopper-Pearson intervals are employed in practice. \citet{bojchevski2023efficient} propose an efficient algorithm to certify the smooth classifier whose run time varies linearly with the radius of certification and is independent of the size of the data. %

\label{main_text_quantum_background}

\subsection{Quantum Phase Estimation}
\label{phaseestimation}

Given a unitary operator $U$, characterized by eigenvalues in the form $\exp(2 \pi i \theta)$, phase estimation emerges as an important quantum algorithm to efficiently find the phase $\theta$ associated with these eigenvalues. This algorithm serves as a fundamental subroutine in various quantum methodologies, including Shor's factoring algorithm \cite{Shor_1997} and Quantum Amplitude Estimation \cite{Brassard2000QuantumAA}.

The core idea of phase estimation involves the iterative application of controlled operations
, typically denoted as $CU$, to the corresponding eigenstate, progressively doubling the number of applications of the controlled gate with each iteration. If the phase can be accurately represented using $t$ bits, after $t$ iterations of applying the $CU$ gate, we obtain a Quantum Fourier Transform (QFT) of the phase. Subsequently, the Quantum Inverse Fourier Transform (QIFT) is used after applying controlled gates, followed by a measurement operation to recover the phase $\theta$. The circuit for Phase Estimation is shown in \cref{fig:pecqis}.

\section{Discrete Quantum Randomized Smoothing}
\label{quantum_certification}

In this section, we go into the details of our methods and our main contributions. We present our quantum algorithm for evaluating the discrete smooth classifier that is provably faster, providing a quadratic speed-up over the classical MC-based algorithm. %
We provide an overview of our approach in \cref{alg:QDRS}, further laying out the details in this section. From a high-level viewpoint, our algorithm works as follows:
\begin{enumerate*}[label=(\roman*)]
    \item we encode all perturbations as quantum superposition $\ket{\Psi(\vect x)}$ such that the probability of measuring the perturbed state $\ket{\Psi(\vect{\tilde x_i})}$ is the same as the probability of transition $\phi(\vect{\tilde x_i}| \vect x)$.

    \item We interpret the smooth classifier as the projection of the superposition $\ket{\Psi(\vect x)}$ onto the subspace spanned by all the perturbations for which the base classifier $f$ predicts $1$.

    \item Based on this interpretation, we construct an operator whose eigenvalues encode the value of the smooth classifier.

    \item Lastly, we evaluate the eigenvalue of this operator using the QAE.  
\end{enumerate*}

\begin{algorithm}[bt]
   \caption{Smooth Quantum Classifier (SQC)}
   \label{alg:QDRS}
\begin{algorithmic}
   \STATE {\bfseries Input:} binary data $\vect x$, classifier as oracle $\mathcal O_f$, flip probabilities $p_+, p_-$, counting qubits $n_q$, confidence $1-\delta$.
   \vspace {1 mm}
   \STATE $U = \textsc{GetDistributionLoaderCircuit()} $.
   \STATE $G = \textsc{GetGroverOperator}(U, \mathcal O_f)$.
   \STATE $\textsc{pec} = \textsc{GetPhaseEstimationCircuit}(G, n_q)$.
   \STATE $\textsc{BindParameters}(\textsc{pec}, \vect x, p_+, p_-)$.
   \vspace {1 mm}
   \FOR{$i=0$ {\bfseries to} $\left \lceil 17\log(\frac{1}{\delta}) \right\rceil $}

      \item  $\textsc{phase}[i] = \textsc{RunAndMeasure}(\textsc{pec})$     
   \ENDFOR
   \vspace {1 mm}
   \STATE $\textsc{sqc} = \textsc{Postprocess}(\textsc{phase})$
   \vspace {1 mm}
   \STATE {\bfseries Return: } $\textsc{sqc}$
\end{algorithmic}
\end{algorithm}

\subsection{Discrete smoothing and QAE}
The evaluation of the smooth classifier in \cref{eq:drs_smoothclass} involves looking at all $2^n$ perturbed states of the input data point $\vect{x}$ and adding the transition probabilities $\phi(\vect{\tilde x} | \vect{x})$ for the states $\vect{\tilde{x}}$ that belong to class $1$. The dimension of the Hilbert space representing quantum states with $n$ qubits is equal to the number of perturbations in binary data of length $n$. This allows us to encode all the perturbed states as orthogonal quantum states\footnote{Any two distinct strings have orthogonal basis encoding, i.e., for $\vect a $, $\vect b$ $\in \{0,1\}^n$ we have $\braket{\vect a}{\vect b} = \mathbf{1}[\vect a = \vect b]$.} using $n$ qubits with a basis embedding.  

Let $\vect{ \tilde x_i}$ be the perturbed data, with $i \in [0,2^{n-1}]$ being the decimal representation of $\vect{ \tilde x_i}$. The state $\ket{\vect {\tilde x}}$ is defined as $\ket{i_1}\otimes\ket{i_2}\otimes...\otimes\ket{i_{n}}$, where $i_1, i_2, ..., i_{n}$ constitutes the perturbed binary string. We propose a parameterized circuit formally as in Lemma \ref{lemma:dist_load} to prepare a superposition state which encodes all perturbations $\ket{\vect {\tilde x_i}}$ of the state $\ket{\vect x}$ as follows: 
\begin{equation}
\label{eq:desired_superposition}
\ket{\Psi(\vect x)}:= \sum_{\vect{i}=0}^{2^n-1} \sqrt{\phi( \vect {\tilde x_i}|\vect x)}\ket{\vect {\tilde x_i}}    
\end{equation}
We define the superposition of all \textit{good} (where $f$ predicts $1$) states and all \textit{bad} (where $f$ predicts $0$) states respectively  as:
\begin{equation}
\label{eq:GoodBadSpplit}
    \begin{split}
        \ket{\vect a} & :=  \frac{\sum_{f(\vect {\tilde x_i})=1}\sqrt{\phi(\vect {\tilde x_i}|\vect x)}\ket{\vect {\tilde x_i}}}{\sqrt{\sum_{f(\vect {\tilde x_i})=1}\phi(\vect {\tilde x_i}|\vect x)}} \\
        \ket{\vect b} & :=  \frac{\sum_{f(\vect {\tilde x_i})\ne1}\sqrt{\phi(\vect {\tilde x_i}|\vect x)}\ket{\vect {\tilde x_i}}}{\sqrt{\sum_{f(\vect {\tilde x_i})\ne1}\phi(\vect {\tilde x_i}|\vect x)}}.
    \end{split}
\end{equation}

Note that $\ket{\vect a}$ and $\ket{\vect b}$ are orthonormal; therefore, using simple algebra, one can decompose $\ket{\psi(\vect x)}$ into orthogonal subspaces spanned by \textit{good} and \textit{bad} perturbation embeddings (\cref{eq:phaseestimation}). Essentially, given an input data point $\vect x \in \mathcal X_D$ and distribution parameters $p_-$ and $p_+$, one can use our \cref{lemma:dist_load} to efficiently construct a circuit that acts in the ground state $\ket{0}$ as follows:
\begin{equation}
\label{eq:phaseestimation}
    U(\vect x, p_-, p_+)\ket{0} = \sqrt{g(\vect x)} \ket{a} + \sqrt{1 - g(\vect x)} \ket{b}.
\end{equation}

Here $g(\vect x) := \sum_{f(\vect{\tilde x_i})=1}\phi(\vect {\tilde x_i}|\vect x)$ is the evaluation of the smooth classifier; see \cref{eq:drs_smoothclass}.
The task of computing the smooth classifier is reduced to finding the projection of the weighted superposition $\ket{\psi(\vect x)}$ onto the subspace of \textit{good} perturbations, i.e., $g(\vect x)$ from \cref{eq:drs_smoothclass} is the amplitude of $\ket{\vect a}$ in \cref{eq:phaseestimation}. This interpretation allows us to use QAE \cite{Brassard2000QuantumAA} to evaluate the smooth classifier. We now state the lemma describing a circuit to load all the perturbations as a superposition with the square of the amplitude being the transition probability $\phi(\vect {\tilde x_i}|\vect x)$, while we defer the proof to Appendix \ref{app:parametrized_loader_circuit}.

\begin{lemma}
\label{lemma:dist_load}
Let $\vect x \in \{0,1\}^n$ represent our binary data, $f: \vect x\mapsto \{0,1\}$ be the base classifier, and $p_{-}, p_{+} \in [0,1]$ be the probability of addition and deletion, respectively, as used in \cref{eq:drs_dist}. Define,  
\begin{equation}\begin{aligned}
    &\theta_0(p_+):= 2\arccos(\sqrt{1 - p_+})\\
    &\begin{aligned}
    \theta_D :&= -2(\arccos(\sqrt{1 - p_+}) + \arccos(\sqrt{1 - p_-}))\\ &= -2 \arccos(\sqrt{1 - p_+}\sqrt{1 - p_-} - \sqrt{p_+}\sqrt{p_-}),      
    \end{aligned}
\end{aligned}\end{equation}
Then the following holds\footnote{This is only true up to a global phase but as we discuss in the proof, it will still lead to the desired Grover's operator and therefore for all further discussions we ignore the global phase.} for the operator $$U(\vect x, p_-, p_+): = \otimes_{m=0}^{n-1}  RY(\theta_0 + \vect x_m \theta_D) \circ RX(\vect x_m \pi)$$ and $\ket{\Psi(x)}$ as defined in \cref{eq:desired_superposition}. 
\begin{equation}
    \label{eq:udef}
    U(\vect x, p_-, p_+)\ket{0} = \ket{\Psi(x)}
\end{equation}
\end{lemma}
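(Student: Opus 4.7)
The plan is to exploit the tensor-product structure on both sides of~\cref{eq:udef}. The target state $\ket{\Psi(\vect x)}$ factorizes across qubits because the joint PMF $\phi$ in~\cref{eq:sparsity_aware} is a product of per-coordinate Bernoulli factors, and the operator $U(\vect x, p_-, p_+)$ is likewise a tensor product over $m$. Consequently, the claim reduces to verifying, for each qubit $m$ and each value $\vect x_m \in \{0,1\}$, that the single-qubit gate $RY(\theta_0 + \vect x_m \theta_D)\, RX(\vect x_m \pi)$ applied to $\ket{0}$ produces the marginal $\sqrt{1 - P_F^{(m)}}\,\ket{\vect x_m} + \sqrt{P_F^{(m)}}\,\ket{1 - \vect x_m}$, up to a global phase.

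For $\vect x_m = 0$ this is immediate: the $RX$ factor is the identity, and plugging $\theta_0 = 2\arccos(\sqrt{1-p_+})$ into the standard matrix of $RY$ and acting on $\ket{0}$ yields $\sqrt{1-p_+}\,\ket{0} + \sqrt{p_+}\,\ket{1}$, which is exactly the desired marginal since $P_F^{(m)} = p_+$ in this case. For $\vect x_m = 1$, I would first observe that $RX(\pi)\ket{0} = -i\,\ket{1}$, and then note that $\theta_0 + \theta_D$ simplifies to $-2\arccos(\sqrt{1-p_-})$; the corresponding half-angle has $\cos = \sqrt{1-p_-}$ and $\sin = -\sqrt{p_-}$, so a direct matrix calculation gives $RY(\theta_0 + \theta_D)\ket{1} = \sqrt{p_-}\,\ket{0} + \sqrt{1-p_-}\,\ket{1}$. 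Combined with the earlier $-i$, this matches the desired marginal (with $P_F^{(m)} = p_-$) up to an overall factor of $-i$, which is a per-qubit global phase.

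The equivalence of the two stated expressions for $\theta_D$ is a one-line application of the cosine addition formula to $A = \arccos(\sqrt{1-p_+})$ and $B = \arccos(\sqrt{1-p_-})$, using $\sin A = \sqrt{p_+}$ and $\sin B = \sqrt{p_-}$. The main subtlety---and the step most likely to cause slip-ups---is the bookkeeping of phases: the $-i$ from each $RX(\pi)$ factor and the negative half-angle inside $RY(\theta_0 + \theta_D)$ both enter the computation, but they collect into an overall global phase $(-i)^{|\vect x|}$ on the full tensor product. This is precisely the reason the lemma is stated only up to a global phase, and it is harmless for the downstream construction because the Grover operator in \textsc{GetGroverOperator} sandwiches $U$ with $U^\dagger$, which cancels the phase exactly.
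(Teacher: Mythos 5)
Your proposal is correct and follows essentially the same route as the paper's proof: factor both the state and the operator qubit-by-qubit, check the two cases $\vect x_m=0$ and $\vect x_m=1$ by direct computation of $RY(\theta_0+\vect x_m\theta_D)\,RX(\vect x_m\pi)\ket{0}$, and absorb the $(-i)^{\sum_j \vect x_j}$ phase by observing that it cancels in $\mathcal R_{\Psi_{\vect x}} = U\mathcal R_0 U^{H}$. The only cosmetic difference is that the paper organizes the computation as two separate tensor-product layers ($D(\vect x)$ loading $\ket{\vect x}$, then $DL$ creating the superposition), whereas you compose the two single-qubit gates directly; the content is identical.
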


Furthermore, we can construct an operator whose eigenvalues encode the value of the smooth classifier and employ the QAE algorithm to solve the problem. However, there are alternative algorithms that can replace QAE~\cite{approximatecounting, iterrativecounting}. 

\subsection{Smooth classifier as the phase of a quantum operator}
\label{DRS_AS_GROVER}
In this subsection, we construct a quantum operation whose eigenvalues encode the value of the smooth classifier. Let $\mathcal R _0:= 2\ket{0}\bra{0} - \mathbb I $ and $\mathcal R _{\psi_x}:= 2\ket{\psi(\vect x)}\bra{\psi(\vect x)} - \mathbb I $ represent the reflections about $\ket{0}$ and $\ket{\psi(\vect x)}$, respectively. Using \cref{eq:udef}, we can readily deduce that:
\begin{equation}
    \label{eq:ref}
    \mathcal R _{\Psi_{\vect x}} = U(\vect x, p_-, p_+)\mathcal R _{0}U(\vect x, p_-, p_+)^H
\end{equation}
Given an oracle $O_f$ that can simulate the action of the classical function $f$, i.e., it can realize the following operation 
$O_f:\ket{\vect x} \mapsto (-1)^{f(\vect x)}\ket{\vect x}$. Its actions on $\ket{\vect a}$ and $\ket{\vect b}$ can be described as $O_f\ket{\vect a} = -\ket{\vect a}$ and $O_f\ket{\vect b} = \ket{\vect b}$. As a result, for any quantum state $\ket{\Phi}$ within the span of $\ket{\vect a}$ and $\ket{\vect b}$, the application of $O_f$ reflects $\ket{\Phi}$ with respect to $\ket{\vect b}$. Defining Grover's operator $G := \mathcal R_{\Psi_{\vect x}} O_f$, it follows that $G$ rotates any state $\phi \in \text{span}(\ket{\vect a}, \ket{\vect b})$ by an angle of $2\theta$ toward $\ket{\vect a}$, where $\theta$ denotes the angle between $\ket{\Psi(\vect x)}$ and $\ket{\vect b}$. Consequently, within the subspace spanned by $\ket{\vect a}$ and $\ket{\vect b}$, the operator $G$ works as a rotation operation with an angle of $2\theta$, resulting in eigenvalues of $\exp(\pm \iota 2 \theta)$ in this plane. Using the representation of $\ket{\Psi(\vect x)}$ as specified in \cref{eq:phaseestimation}, along with the orthonormality of $\ket{\vect a}$ and $\ket{\vect b}$, we can express the output of the smooth classifier as follows: 
\begin{equation}    \label{eq:quantum_smooth}
    g(\vect x) = \sum_{f(\vect{\tilde x_i})=1} \phi(\vect{\tilde x_i}|{\vect x}) = |\braket{\vect a|\Psi(\vect x)}|^2 = \sin^2(\theta)
\end{equation}
Therefore, finding the output of the smooth classifier $g(\vect x)$ reduces to finding the phase of Grover's operator $G$ corresponding to the eigenvalues in $\texttt{span}(\ket{a}, \ket{b})$. We employ the phase estimation algorithm briefly described in \cref{phaseestimation} to find the phase of the desired Grover operator. Using the results of \citet{Brassard2000QuantumAA}, one can formally estimate the order of convergence of the quantum algorithm $SQC$ from \cref{alg:QDRS} as \cref{lemma:convergence_quantum}. Finally, we state our main result on \textit{quadratic} speedup as \cref{thm:main}.

\begin{lemma}
\label{lemma:convergence_quantum}
    Given SQC($O_f, p_-, p_+$, $n_q$, $1-\delta$) as defined in \cref{alg:QDRS}. To approximate the lower (upper) bound $g_a(x)$ of $g(x)$, such that $|g_a(x) - g(x)| < \epsilon$ with a certainty of $1 - \delta$, it requires $O(\ln(\frac{1}{\delta})\frac{1}{\epsilon})$ calls to the oracle $O_f$ simulating the base classifier $f$.

\end{lemma}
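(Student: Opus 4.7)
The plan is to combine two standard ingredients: the accuracy guarantee for Quantum Amplitude Estimation due to \citet{Brassard2000QuantumAA}, and a confidence-boosting median-of-independent-runs trick that accounts for the $\lceil 17\log(1/\delta)\rceil$ repetitions appearing in \cref{alg:QDRS}.

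First I would invoke the QAE analysis on the Grover operator $G$ constructed in \cref{DRS_AS_GROVER}. Since $g(\vect x) = \sin^2(\theta)$ by \cref{eq:quantum_smooth}, a single run of phase estimation with $n_q$ counting qubits and $M = 2^{n_q}$ controlled applications of $G$ returns an estimate $\tilde g$ of $g(\vect x)$ satisfying
\[
|\tilde g - g(\vect x)| \le \tfrac{2\pi\sqrt{g(\vect x)(1 - g(\vect x))}}{M} + \tfrac{\pi^2}{M^2}
\]
with probability at least $8/\pi^2$. Choosing $n_q$ such that $M = \Theta(1/\epsilon)$ makes the right-hand side $O(\epsilon)$. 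Because each application of $G$ invokes $O_f$ a constant number of times (once inside $O_f$, plus the distribution-loader $U$ and its inverse from \cref{eq:ref}, which do not touch the oracle), a single run costs $O(1/\epsilon)$ oracle calls.

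Next, to lift the per-run success probability $p_0 \ge 8/\pi^2 > 1/2$ up to $1-\delta$, I would apply standard median amplification. Running phase estimation independently $k$ times and returning the median estimate, the median is $\epsilon$-accurate whenever strictly more than half the runs are $\epsilon$-accurate, so a Chernoff bound on $k$ i.i.d.\ Bernoulli$(p_0)$ trials gives failure probability at most $\exp(-c\,k)$ for a constant $c$ depending on $p_0 - 1/2$. Solving $\exp(-c\,k) \le \delta$ yields $k = \lceil 17 \log(1/\delta)\rceil$ with the correct constant, matching the loop count in \cref{alg:QDRS}. The total oracle cost is then $k \cdot M = O\!\bigl(\log(1/\delta)/\epsilon\bigr)$, giving the lemma.

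The main obstacle I anticipate is not the asymptotics but pinning down the explicit constant $17$: one has to run the Chernoff calculation carefully with $p_0 = 8/\pi^2 \approx 0.81$ and verify that $17\ln(1/\delta)$ comfortably exceeds $\ln(1/\delta)$ divided by the binary-KL gap $D(1/2\,\|\,p_0)$ across the entire $\delta \in (0,1)$ regime. Everything else is a mechanical substitution of the Brassard--H\o{}yer--Mosca--Tapp bound into the circuit assembled by \textsc{GetPhaseEstimationCircuit} in \cref{alg:QDRS}; the loader circuit of \cref{lemma:dist_load} and the reflection identity \cref{eq:ref} guarantee that $G$ is implemented exactly, so no additional approximation error enters the analysis.
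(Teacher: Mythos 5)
Your proposal is correct and follows essentially the same route as the paper's proof: the Brassard--H\o{}yer--Mosca--Tapp bound $|\tilde a - a| \le 2\pi\sqrt{a(1-a)}/T + \pi^2/T^2$ with per-run success probability $8/\pi^2$, followed by median amplification via a Chernoff bound (the paper uses the multiplicative form $P[Y \le (1-\delta)\mu] \le e^{-\delta^2\mu/2}$ with $\delta = 1 - \tfrac{1}{2p_0}$, which yields exactly the constant $17$ you flagged as the remaining bookkeeping). The only cosmetic difference is that the paper counts the oracle cost as $T$ calls ($T-1$ in the controlled Grover powers plus one in state preparation) rather than your per-application accounting, but the conclusion is identical.
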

\begin{theorem}
\label{thm:main}[Quadratic Speedup]
SQC($O_f, p_-, p_+$, $nqc$, $1 - \delta$) as mentioned in Algorithm \ref{alg:QDRS} needs quadratically fewer calls to the base classifier compared to the classical Monte-Carlo method with Clopper-Pearson confidence intervals to obtain an estimate of $g(x)$ within the same accuracy.
\end{theorem}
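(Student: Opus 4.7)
The plan is to compare the two sample complexities directly and observe that the ratio is quadratic in $1/\epsilon$. \cref{lemma:convergence_quantum} already delivers the quantum side of the comparison: the SQC routine of \cref{alg:QDRS} returns an estimate $g_a(x)$ with $|g_a(x) - g(x)| < \epsilon$ and confidence $1 - \delta$ after $O(\ln(1/\delta) \cdot 1/\epsilon)$ calls to the oracle $O_f$. So the theorem reduces to showing that the classical Monte-Carlo estimator with Clopper-Pearson confidence intervals requires $\Omega(\ln(1/\delta) \cdot 1/\epsilon^2)$ calls to $f$ for the same guarantee.

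First, I would set up the classical side. The Monte-Carlo estimator draws $N$ i.i.d.\ samples $\vect{\tilde x}_1, \dots, \vect{\tilde x}_N \sim \phi(\cdot \mid \vect x)$, evaluates $f$ on each, and forms the empirical frequency $\hat g_N(\vect x) = \frac{1}{N}\sum_{i=1}^N f(\vect{\tilde x}_i)$, which is the sufficient statistic of a $\mathrm{Binomial}(N, g(\vect x))$ variate. The Clopper-Pearson procedure then inverts the binomial CDF at level $\delta/2$ on each side to obtain a confidence interval $[\underline g, \overline g]$ that is \emph{exact}, i.e.\ covers $g(\vect x)$ with probability at least $1-\delta$, and whose endpoints can be used as the required lower/upper bound.

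Second, I would bound the width $\overline g - \underline g$ of that interval. Standard tail estimates for the Beta quantile (equivalently, the Chernoff--Hoeffding inequality applied to the binomial) give that the half-width concentrates at rate $O(\sqrt{\ln(1/\delta)/N})$, and this rate is tight for $g(\vect x)$ bounded away from $0$ and $1$. Requiring the half-width to be at most $\epsilon$ therefore forces $N = \Omega(\ln(1/\delta)/\epsilon^2)$; this is the standard lower bound on sample complexity for Bernoulli parameter estimation, which matches the classical MC guarantees of \citet{cohen2019} and \citet{bojchevski2023efficient}.

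Finally, I would take the ratio. The quantum complexity $O(\ln(1/\delta)/\epsilon)$ versus the classical $\Theta(\ln(1/\delta)/\epsilon^2)$ gives a factor of $\Theta(1/\epsilon)$ fewer oracle calls, which is the quadratic speedup claimed. The main obstacle is being precise about the Clopper-Pearson \emph{lower} bound on $N$ rather than just an upper bound: one must argue that no smaller $N$ suffices so the comparison is genuinely quadratic and not merely an artifact of a loose classical analysis. I would address this by invoking the binomial proportion lower bound (e.g.\ via a Neyman-Pearson argument on two Bernoulli instances separated by $2\epsilon$), which shows that no confidence interval procedure operating on $N$ i.i.d.\ samples can achieve width $\epsilon$ at confidence $1-\delta$ with fewer than $\Omega(\ln(1/\delta)/\epsilon^2)$ samples, so in particular Clopper-Pearson cannot.
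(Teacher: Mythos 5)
Your proposal is correct and follows the same overall strategy as the paper: take the quantum complexity $O(\ln(1/\delta)/\epsilon)$ directly from \cref{lemma:convergence_quantum}, show the classical Clopper--Pearson procedure needs $\Theta(\ln(1/\delta)/\epsilon^2)$ base-classifier calls for the same $(\epsilon,\delta)$ guarantee, and conclude by comparing the two. The one place where you diverge is in how the classical lower bound is justified. The paper invokes a known asymptotic expansion of the \emph{expected length} of the Clopper--Pearson interval, $\mathbb E[L_{cp}] = 2 z_{\delta/2} N^{-1/2}\sqrt{p(1-p)} + O(N^{-1})$, and reads off $\epsilon = \Omega(\sqrt{\ln(1/\delta)}\, N^{-1/2})$ from the Gaussian-tail behaviour of $z_{\delta/2}$; this single cited formula supplies both directions of the $\Theta(\cdot)$ at once, at the cost of implicitly assuming $p$ bounded away from $0$ and $1$ (a caveat you also flag). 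You instead reconstruct the bound from first principles: Chernoff--Hoeffding for the upper bound on the interval width, and a Neyman--Pearson two-point argument to certify that no procedure --- hence in particular Clopper--Pearson --- can do better than $\Omega(\ln(1/\delta)/\epsilon^2)$. Your route is more self-contained and proves a strictly stronger statement (a lower bound over all classical estimators rather than just the Clopper--Pearson construction), which makes the ``quadratically fewer'' claim robust against the objection that the classical analysis might simply be loose; the paper's route is shorter but rests on the cited expected-length theorem and only addresses the Clopper--Pearson interval specifically, which is all the theorem statement formally requires.
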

We outline the proofs in \cref{proofs} while building an intuitive understanding of the convergence result in \cref{phase_estimation}. We elaborate on deriving the guaranteed lower bound of the exact smooth classifier in \cref{phase_estimation}.

\section{Discrete Perturbations on Continuous Data}
\label{discrete_data_main_certification}

In the following, we adapt our framework to a new threat model. The data are continuous, but the perturbations of the adversary are discrete.
For example, we have an RGB image comprising $M \times M$ pixels, where the adversary can corrupt up to $N$ pixels. While the data space expands to $\mathbb R^{3 \times M \times M}$, the set of perturbations to a fixed image consists of $2^N$ elements--each pixel can be corrupted or remains unaffected). Each perturbation of an image can be uniquely represented by a binary string of length $N$. For instance, setting the $i^{th}$ bit to $1$ indicates the corruption of the $i^{th}$ pixel in the given image. We demonstrate the application of discrete randomized smoothing in this context to certify robustness against potential binary perturbations.

\subsection{Threat Model and Smoothing Distribution}
In an $N$ dimensional data space, the perturbation model $p(\cdot,\cdot)$ allows the adversary to affect the $i^{th}$ feature $x_i$ of the input data $\vect x$ as $p(x_i, i)$. We define the set of perturbations as:
\begin{equation}
   \mathcal P_{\vect x} := \{\vect {\tilde x} \in \mathcal X | \tilde x_j \in \{x_j, p(x_j, j)\}\}. 
\end{equation}
Based on the set of perturbations $\mathcal P_{\vect x}$ we define the threat model as the ball of radius $r$ around $\vect x$ in \cref{eq:pertball_disc_pert}, and the corresponding smoothing distribution as \cref{eq:discrete_perturbation_dist}.  
\begin{equation}
    \label{eq:pertball_disc_pert}
\begin{aligned}
    \mathcal{B}_{r}(\vect x):= \{\vect {\tilde x} \in \mathcal P_{\vect x} \text{ s.t. } & \sum_{\vect{\tilde x}_i  = p(\vect x_i,i)} 1 \le r \}
\end{aligned}
\end{equation}
\begin{equation}
\label{eq:discrete_perturbation_dist}
\hat \phi(\vect {\tilde x} | \vect x; p)  := (p)^{\sum_{\vect{\tilde x}_i  = p(\vect x_i,i)} \mathbf{1}}(1 - p)^{\sum_{\vect{\tilde x}_i  = \vect x_i} \mathbf{1}}
\end{equation}

\subsection{Smooth Classifier and Robustness Certification}
Let $\hat \classifier : \mathcal X \mapsto \{0,1\}$ denote the base classifier, we propose to define the smooth classifier as 
\begin{equation}
\label{eq:drs_class_discrete}
\hat g(\vect x) := \sum_{\vect{\tilde x} \in \mathcal P_{\vect x} \text{ and } \hat \classifier (\vect{\tilde x}) = 1} \hat \phi(\vect {\tilde x} | \vect x; p)    
\end{equation}
Further, to certify $\hat g(\vect x)$  we establish a bijective mapping $\eta_x$ 
 from $\mathcal{P}_{\vect x}$ onto $\{0,1\}^N$. This mapping ensures that the distribution $\hat \phi(\vect{\tilde x} | \vect x; p)$ coincides with $\phi(\eta_{\vect x}(\vect{\tilde x}) | \eta_{\vect x}(\vect x); p_+ = p, p_-=0)$ which is defined in \cref{eq:sparsity_aware}. Additionally, for a given $\vect x$, we define a classifier $\classifier$ on $\{0,1\}^N$ based on the base classifier $\hat \classifier$ on $\mathcal X$, as $\classifier = \hat \classifier \circ \eta_{\vect x}^{-1}$. By certifying the smooth classifier, we establish analogous guarantees for $\hat g$. The formal description of this approach is provided in \cref{discrete_perturbations}. Furthermore, as discussed in \cref{discrete_perturbations}, there's no necessity to construct the superposition of all perturbed states. Instead, we generate the superposition of perturbation representations $\eta_{\vect x}(\vect{\tilde x})$ utilizing \cref{lemma:dist_load}, and integrate it with $\vect x$ during oracle evaluation.

\section{Experiments}
\label{experiments}
In this section, we demonstrate the utility of our proposed framework and examine its certification strength in multiple domains, including binary images, graphs, and natural language processing. %
Within the realm of discrete perturbations framework (\cref{discrete_data_main_certification} and \cref{discrete_perturbations}), we choose two settings: \begin{enumerate*}[label=(\roman*)]
    \item Image classification on the binary MNIST dataset, 
    \item Sentiment analysis on the IMDB reviews.
\end{enumerate*} We also evaluate our framework on the discrete data setting: \begin{enumerate*}[resume, label=(\roman*)] 
    \item Graph classification task to find a $ 4-$ clique within the collection graphs with $6$ nodes.
    We present additional results with more details on individual experiments in \cref{extra_experiments}. 
\end{enumerate*}

We compare the performance of the quantum smooth classifier against the exact smooth classifier, which can be evaluated because of the limited perturbation representation. Moreover, we do not implement the classical models as quantum operations directly; instead, we construct a diagonal unitary that mimics the model giving the same output at all the perturbed data points. All experiments are carried out as a QC simulation on Qiskit with Aer backend~\cite{Qiskit} on an A100 node with up to 1TB of memory. We use SEML to manage our slurm experiments~\cite{seml_2022}%

\subsection{Binary-MNIST Classification}
We binarise the images of the MNIST handwritten digit dataset \cite{deng2012mnist} and rescale them to $16 \times 16$. The certification of the smooth classifier requires computing the output corresponding to each class. Therefore, to simplify the problem computationally, we stick to the binary classification of predicting $4$ or $\neg 4$.
To construct the smooth classifier, we use $p_{+} = p_{-} = 0.3$ as flip probabilities. We define a perturbation window with $17$ pixels, as shown in \cref{fig:binary_MNIST_window}, and we certify against perturbations within that window. We then randomly select $50$ images from the validation split to evaluate our algorithm. Figure \ref{fig:binary_MNIST} shows one of the validation images, the perturbation window, and a random perturbation to the given validation image within the perturbation window. 

\begin{figure}[t]  %
    \centering
   \subfigure[]{\includegraphics[width=0.32\linewidth]{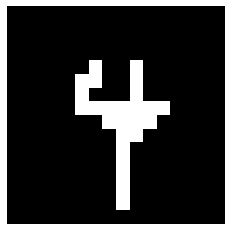}\label{fig:binary_MNIST_4}}
    \subfigure[]{\includegraphics[width=0.32\linewidth]{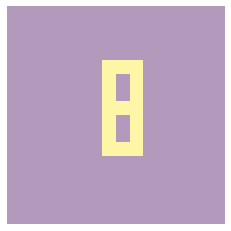}\label{fig:binary_MNIST_window}}
    \subfigure[]{\includegraphics[width=0.32\linewidth]{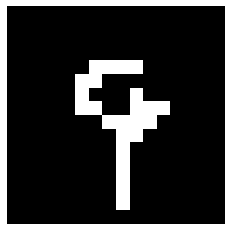}\label{fig:binary_MNIST_4_pert}}

    \caption{(a) Randomly selected validation image; (b) Window of perturbation, only the highlighted pixels are allowed to be perturbed and the subsequent robustness guarantees are only against attacks within that window; (c) Validation image randomly perturbed within the window.}
    \label{fig:binary_MNIST}        
    
\end{figure}

In \cref{fig:heatmap_binary_MNIST} we show heatmaps depicting the percentage of certified images for varying perturbations and with $5$, $6$, and $7$ counting qubits and the actual smooth classifier. The number of counting qubits represents the binary string corresponding to the phase. When we increase the amount, we see a significant improvement in the certification performance. In \cref{fig:MNIST_comparison}, we evaluate the proportion of certified images relative to the overall perturbations ($r_a + r_d$), considering various numbers of counting qubits alongside the exact smooth classifier. We observe that we already achieve a decent approximation using only 7 counting qubits.
\begin{figure}[t]  %
    \centering
    \subfigure[]{\includegraphics[width=0.4\linewidth]{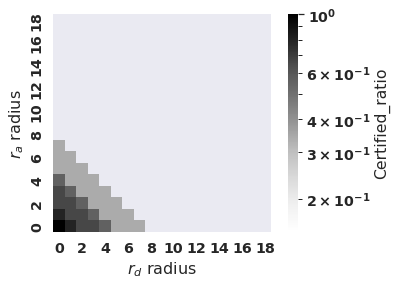}\label{fig:heatmap_binary_MNIST_5}}
    \subfigure[]{\includegraphics[width=0.4\linewidth]{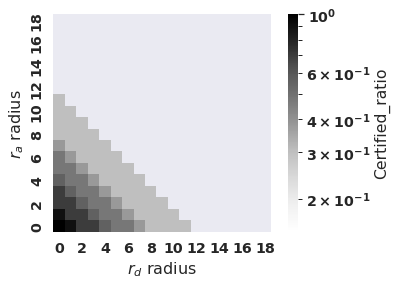}\label{fig:heatmap_binary_MNIST_6}}
    \subfigure[]{\includegraphics[width=0.4\linewidth]{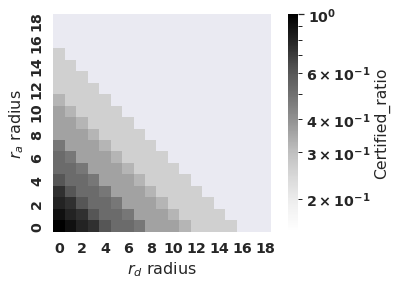}\label{fig:heatmap_binary_MNIST_7}}
    \subfigure[]{\includegraphics[width=0.4\linewidth]{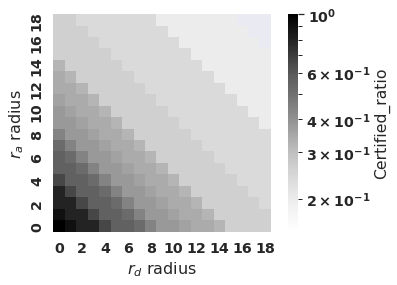}\label{fig:heatmap_binary_MNIST_exact}}
     \caption{Heatmaps showing the percentage of 50 validation images in consideration that are robust against the indicated set of perturbations in the grid corresponding to the quantum smooth classifier($p_- = 0.3, p_+=0.3$) with (a) $5$ (b) $6$ (c) $7$ counting qubits and (d) corresponding to the actual smooth classifier.}
    \label{fig:heatmap_binary_MNIST}        
    \vspace{-0.2in}
\end{figure}
\begin{figure}[tbh]
\vskip 0.2in
\begin{center}
\centerline{\includegraphics[width=0.75\columnwidth]{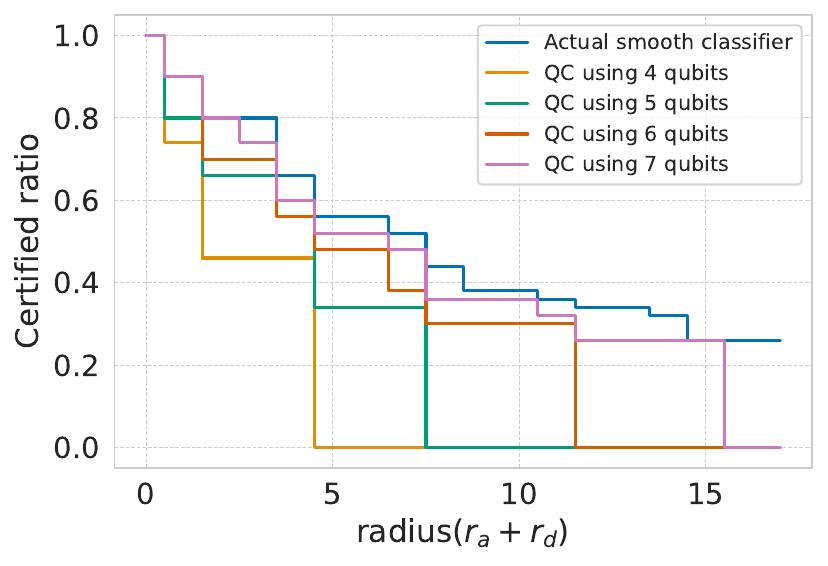}}
\caption{Certified ratio, exact classifier vs quantum classifiers ($p_- = 0.3, p_+=0.3$) using $4$, $5$, $6$ and $7$ counting qubits evaluated for randomly selected $50$ images.}
\label{fig:MNIST_comparison}
\end{center}
\vskip -0.2in
\end{figure}
In \cref{fig:Convergence_MNIST}, we present the empirical comparison between the classical and quantum classifier for the convergence rate of the \textit{average error} for \textit{the number of calls to the oracle}. We observe that, although the error is still higher for the quantum approach, the slope of the quantum model is strongly favorable. For increased dimensionality of the data, our curve stays the same while the classic randomized smoothing is likely to get worse. From the slopes in \cref{fig:Convergence_MNIST}, we can calculate that if the quantum algorithm needs $N$ samples, then the classical algorithm takes $O(N^{1.92})$ samples.

\begin{figure}[t]
\vskip 0.2in
\begin{center}
\centerline{\includegraphics[width=0.75\columnwidth]{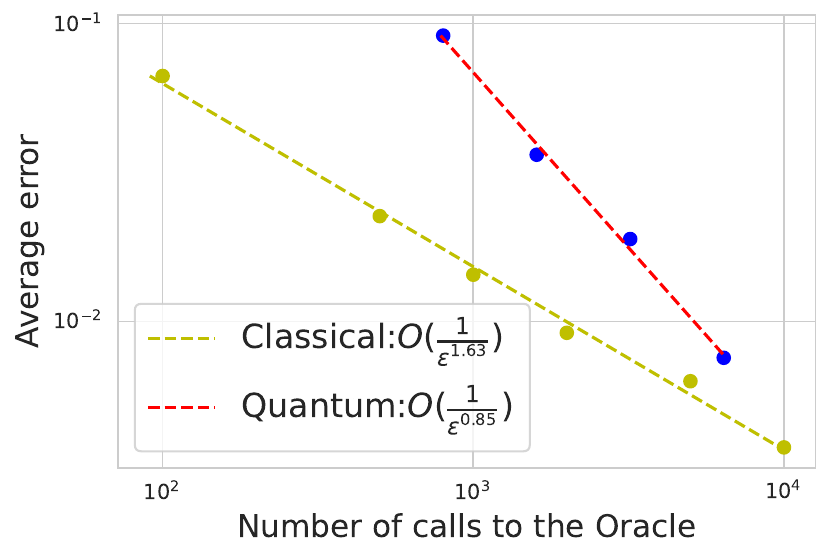}}

\caption{Convergence of error with number of calls to the oracle. Comparison for \textbf{Classical} vs \textbf{Quantum} algorithm with errors averaged over $50$ images from the MNIST testset.}

\label{fig:Convergence_MNIST}

\end{center}
\vskip -0.2in
\end{figure}

\subsection{Graph Classification}
We further investigate discrete quantum randomized smoothing for graph classification. A subfield of machine learning on graphs focuses on detecting subgraph structures within a graph with graph neural networks to determine their expressivity~\cite{chen2020graph, Bouritsas2020ImprovingGN, campi2023expressivity}. Building upon this concept, we develop a dataset specifically tailored for binary classification by generating graphs that either contain a 4-clique or lack one entirely. These graphs are constructed using the Erdős–Rényi (ER) model with six nodes. For graphs belonging to class 0, we use a connection probability of $65\%$, while those in class 1 are assigned a connection probability of $30\%$. We showcase an example of each class in Figure \ref{fig:graph_dataset}. We trained a PPGN~\cite{maron2020provably} graph neural network on this data as the base classifier. Then we apply the smoothing approach. 

\begin{figure}[tbh]  %
    \centering

   \subfigure[]{\centering\includegraphics[width=0.3\linewidth]{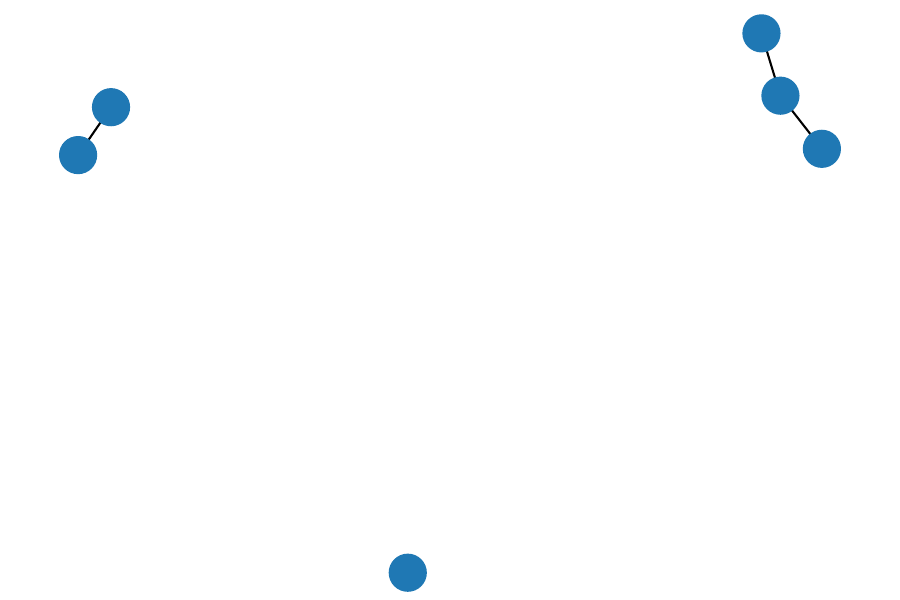}}\hspace{0.5in}
    \subfigure[]{\centering\includegraphics[width=0.3\linewidth]{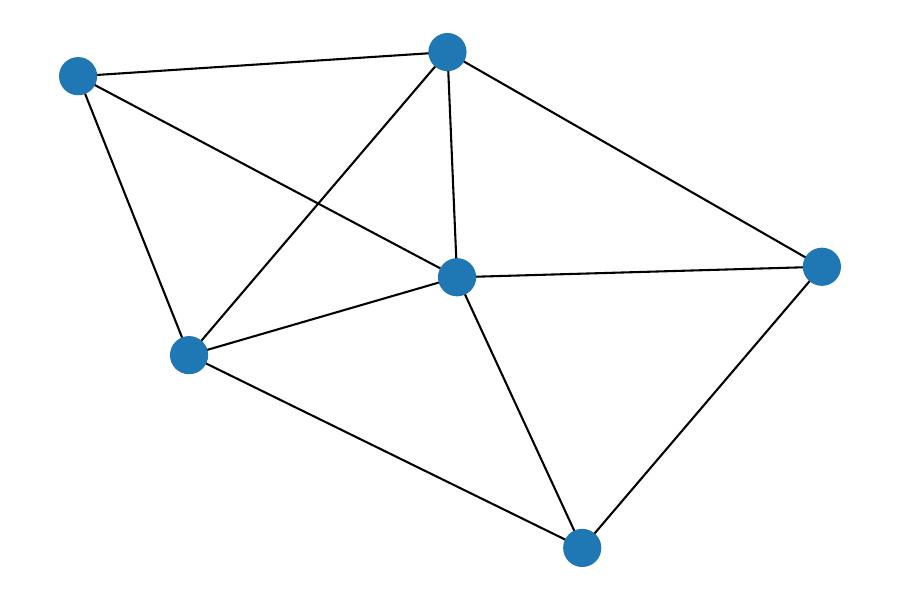}}

\caption{Example images of the graph classification dataset. On the left, we see a graph from class 1. These graphs do not contain a 4-clique. The right graph is an instance of class 2. All of these contain a 4 clique.}

\label{fig:graph_dataset}

\end{figure}

To smooth the data, we use $p_+=0.3$ and $p_- = 0.0$, since any deletion of an edge has a high probability of destroying a potential 4-clique. Thus, because of the absence of noise for edge deletions, we cannot certify against any adversarial deletion. The line plot in \cref{fig:graphs_actual_classifier_approximation} shows the behavior of our simulated quantum approach for $4-8$ counting qubits. We compare it with the actual smooth classifier that acts as a ground-truth result of the classic discrete smoothing approach~\cite{bojchevski2023efficient}. We can certify most of the graphs for two additions, while some graphs can even be certified for up to 6\footnote{Graphs of class 0 already containing a 4-clique can have as many edges added as possible.}. As we increase the number of counting qubits to $8$, our approximation is very close to the ground truth values, see \cref{fig:graphs_actual_classifier_approximation}.  We observe a significant difference between the quantum classifier using $8$ counting qubits and the exact smooth classifier arising at radius 7, where our approach is unable to certify. %
Given that our approach consistently predicts a lower bound, it will never predict 1 for cases where the value is precisely 1. Hence, our certification will hold only for finite perturbations. This limitation explains why we can only certify up to six additions, while the ground truth extends to infinity.%

\begin{figure}[t]
\vskip 0.2in
\begin{center}
\centerline{\includegraphics[width=0.75\columnwidth]{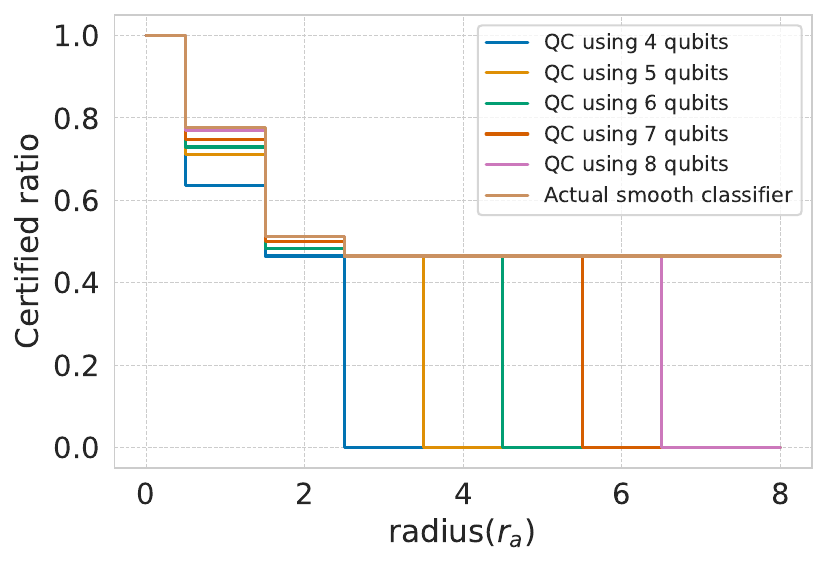}}
\caption{Certified ratio for the exact classifier and quantum classifier with $p_- = 0.0$, $p_+=0.3$ using four to eight counting qubits evaluated on $170$ randomly selected graphs.}
\label{fig:graphs_actual_classifier_approximation}
\end{center}
\vskip -0.2in
\end{figure}

\subsection{Sentiment Analysis}
Sentiment analysis is a subset of text classification in which aims to determine the polarity of sentences toward positive, negative, or neutral. We use a pre-trained model to predict the \textit{Positive} or \textit{Negative} tone of the IMDB reviews from Hugging Face and use it to construct a smooth classifier by randomly removing neutral words from the input sentences and finally compute certificates for the model's robustness against removal of such neutral words. The certification for a given input gives formal guarantees that the model will not change the prediction upon removal of a certain number of neutral words. We use a list of $29$ neutral words and randomly take $600$ ($300$ positively labeled and $300$ negatively labeled) reviews that have around $8-10$ neutral words from the list. 
We use $p_{rem}=0.2$ as the probability of removing neutral words. In \cref{fig:text_certificate}, we plot the percentage of certified sentences showing the strength of the certificate. It shows the increasing certification strengths for varying number of counting qubits. %

\begin{figure}[t]
\vskip 0.2in
\begin{center}
\centerline{\includegraphics[width=0.75\columnwidth]{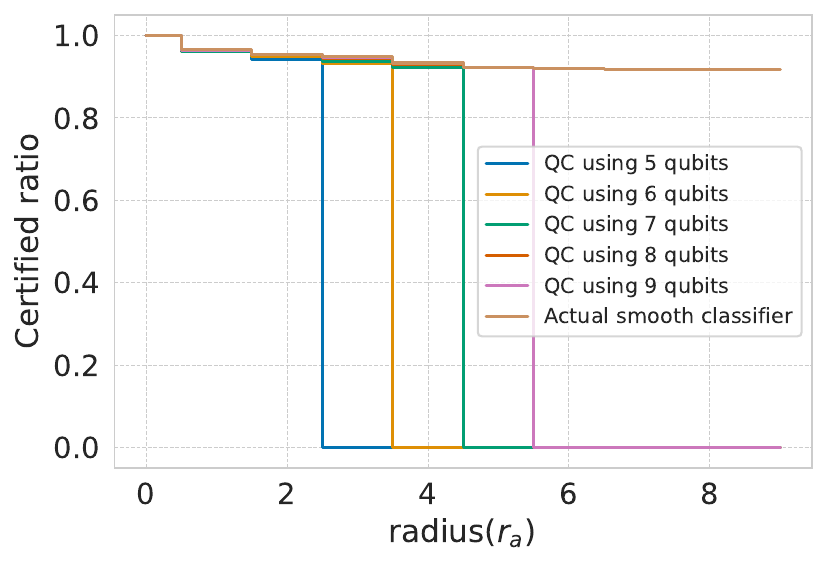}}
\caption{Certified ratio for the exact classifier vs quantum classifier with $p_{\text{rem}} = 0.2$ using five to nine counting qubits evaluated for $600$ randomly selected reviews with $8-10$ neutral words.}%
\label{fig:text_certificate}
\end{center}
\vskip -0.2in
\end{figure}

\section{Conclusion}
\label{conclusion}
In this work, we propose a novel framework that combines discrete randomized smoothing with quantum computing for robust classification of discrete data. We demonstrate a quadratic reduction in model evaluations using this framework. Additionally, we extend our approach to certify against threat models where data can be continuous but adversaries affect only restricted parts using discrete attacks. We evaluate the robustness of our approach in three different classification scenarios, showing that with five to seven counting qubits, a satisfactory approximation of the actual smooth classifier is achievable. Despite these improvements, our framework is limited as it handles either discrete data or continuous data with a restricted threat model as described in Section~\ref{discrete_data_main_certification}. Nonetheless, this study highlights the potential of discrete quantum randomized smoothing for certifying neural networks, opening avenues for further exploration and research.

\appendix

\subsection{Applying randomized smoothing to discrete
perturbations}
\label{discrete_perturbations}
We start by presenting a general framework that extends the idea of RS for discrete data to discrete perturbations and show its utility in verifying the correctness of some of the test cases presented in \cref{experiments}. We essentially construct a classifier on discrete data from the actual classifier, whose certification guarantees the certification of the original classifier for discrete perturbations. This allows us to validate our quantum approach on more realistic datasets. 
 
Assume that the input data lies in some $M-$dimensional data space $\mathcal{C}^M$, where $\mathcal C = \{0,1\}$, $\{0,1, ..., D\}$, $\mathbb Z$, $\mathbb Q$ , $\mathbb R$ or $\mathbb C$ and the adversary is allowed to affect only $N$ features\footnote{In the windowed MNIST experiment we allowed the adversary to perturb only a subset of the whole image. Therefore this framework can be used to verify its correctness.}, indexed as $x_{j_i}$ for $i = 0, ..., N-1$. Denote these set of indices as $\mathcal I_N := \{j_i\}_{i=0}^{N-1}$ and the remaining $M-N$ indices as $\overline{\mathcal I}_N := \{0,1,...,N-1\}/ \mathcal I_N$. Additionally, assume that the adversary can leave these features unaffected or apply a data-dependent perturbation $p(x_{j_i}, j_i)$. The perturbation model, denoted as $p(\cdot, \cdot)$ takes the $i^{th}$ feature $x_i$ to $p(x_i, i)$ and applying the perturbation model again takes it back to $x_i$. For example, \textit{non-corrupted} to \textit{corrupted}; $0$ to $1$ or $1$ to $0$; etc. Given $\vect x \in \mathcal C^M$, define the set of all perturbations on $\vect x$ as:
\begin{equation}
\label{eq:pert_set}
   \mathcal P_{\vect x} := \{\vect y \in \mathcal C^M |y_j = \left\{\begin{array}{lr}
       x_j & \text{for } j \in \mathcal I_N\\
       \in \{x_j, p(x_j, j)\} & \text{for } j \in \overline {\mathcal I_N}
        \end{array}\right\}. 
\end{equation}
Note that $\forall \vect z \in \mathcal P_{\vect x}$, $\mathcal P_{\vect z} = \mathcal P_{\vect x}$. This is because applying the perturbation model $p$ again will take $p(x_i, i)$ back to $x_i$. To construct a smooth classifier $\tilde g$, we can sample $\vect{\tilde{x}}$ from a distribution $\tilde \phi(.|\vect x)$ defined on $\mathcal P_{\vect x}$ and evaluate $\mathbb P(\tilde f(\vect{\tilde{x}} = 1))$ as our smooth classifier, where $\tilde f$ is the base classifier. Therefore as before the smooth classifier $\tilde g(\vect x)$ can be obtained as:
\begin{equation}
    \label{eq:discrete_pert_smooth}
    \tilde g(\vect x) := \sum_{\vect{\tilde x} \in \mathcal P_{\vect x} \text{ and } \tilde f(\vect{\tilde x}) = 1} 
 \tilde \phi(\vect{\tilde x| \vect x})
\end{equation}
We define $\tilde \phi$ to allow data-dependent perturbation using two distinct probabilities of perturbation $p_+$ and $p_-$. Let $q(,,.)$ determine the data-dependent probability, i.e. $q(x_i, i) \in \{p_+, p_-\}$ gives the probability of perturbation of the $i^{th}$ feature $x_i$, such that $q(p(x_i,i), i) = p_+$ if $q(x_i, i)  = p_-$ and $p_+$ otherwise. 

Given this setup, we want to argue the following, firstly \cref{eq:discrete_pert_smooth} can be certified using the framework of \cite{bojchevski2023efficient}
Secondly, it can be computed using our quantum algorithm without the need to construct a superposition of perturbed states $\vect{\tilde x} \in \mathcal P_{\vect x}$. We just need to construct the superposition in \cref{lemma:dist_load} and we can make data representation as part of the Oracle construction.

Clearly, $\lvert \mathcal P_{\vect x} \rvert = 2^N$, therefore we can find a bijection $\eta_x : \mathcal P_{\vect x} \mapsto \{0,1\}^N$. Define $\eta_x$ as follows:
\begin{equation}
\label{eq:pert_map}
    \eta_{\vect x}(\vect{\tilde x})_j = \left\{ \begin{array}{lr}
       1 & \text{if } q(x_j, j)  = p_-\\
       0 & \text{if } q(x_j, j)  = p_+
        \end{array}\right 
        \}
\end{equation}

It is easy to verify that \cref{eq:pert_map} is a bijection which allows us to define $f : \{0,1\}^N \mapsto \{0,1\}$ as a binary classifier on binary data such that $f(b) := \tilde f(\eta_{\vect x}^{-1}(b))$. The construction of $\eta_{\vect x}$ also ensures that $\forall \vect y, \vect z \in \mathcal P _{\vect x}$, $\tilde \phi(\vect y | \vect z) = \phi(\eta_{\vect x}(\vect y)|\eta_{\vect x}(\vect z))$, where $\phi$ is as defined in \cref{eq:sparsity_aware}. Using the framework of \citet{bojchevski2023efficient}, the smooth classifier defined as:
\begin{equation}
    \label{eq:disc_}
    g(\vect b) := \sum_{f(\vect{\tilde b}) = 1} 
\phi(\vect{\tilde b| \vect b})
\end{equation}
 can be certified. Since, \cref{eq:disc_} can be written as:
\begin{equation}
\begin{aligned}
    \label{eq:disc_0}
    g(\vect b) &= \smashoperator[r]{\sum_{\eta^{-1}_{\vect{x}}(\vect{\tilde b}) \in \mathcal P_{\vect x}, \tilde f(\eta^{-1}_{\vect{x}}\vect{\tilde b}) = 1}} 
    \tilde \phi(\vect{\eta^{-1}_{\vect x}(\vect {\tilde b})| \eta^{-1}_{\vect x}(\vect {b})}) = \tilde g(\eta^{-1}_{\vect x}(\vect {b})).     
\end{aligned}
\end{equation}
If changing $\vect b$ to $\vect{\tilde b}$ does not change the prediction of $g$ it can be inferred that changing $\eta_{\vect x}^{-1}{\vect b}$ to $\eta_{\vect x}^{-1}{\vect{\tilde b}}$ will not change the prediction of $\tilde g$. Therefore, any guarantees for $g$ will also hold correspondingly for $\tilde g$.
\paragraph{Implementing smooth classifier for binary perturbations on QC}
Assume we are given a classical neural machine learning model $\tilde \classifier$. For a given $\vect x$, we can implement a function $\eta_{\vect x}^{-1}$, and composing it with $\tilde \classifier$ we can implement $\classifier_{\vect x} := \tilde \classifier \circ \eta_{\vect x}^{-1}$. Employing the classical circuit that implements the base classifier $f_{\vect x}$, for the perturbation representation, we can construct the reversible circuit performing the following transformation: $(\vect b,y) \mapsto (\vect b,y\bigoplus f_{\vect x}(\vect b))$, as described in~\cite{nielsenchuang}. These reversible computations can then be simulated using quantum gates to obtain the corresponding quantum neural network oracle $O_{ f}$ capable of realizing the following map: $\ket{\vect b}\ket{-} \mapsto (-1)^{f_{\vect x}(\vect b)}\ket{\vect b}\ket{-}$. In principle, given any classical base classifier $\tilde f$, we can construct a quantum oracle $O_{f}:\ket{\vect x} \mapsto (-1)^{f(\vect x)}\ket{\vect x}$ with a comparable compute time. That is, we assume the compute time of the classical operation $\tilde f(\vect x)$ is similar to that of the corresponding quantum operation $O_f\ket{\vect b}$. This assumption allows us to expect that any asymptotic improvement in the number of calls to the oracle (or base classifier) will also be reflected in the actual runtime. Therefore we just need to prepare a superposition of all the perturbations as we can construct a data-dependent oracle that implicitly implements the base classifier on the data. For experiments, we have constructed a data-dependent diagonal unitary that given the perturbation representation gives the desired output.

\subsection{Phase estimation circuit and obtaining the lower bound}
\label{phase_estimation}
\begin{figure}
    \centering
    \includegraphics[width=\linewidth]{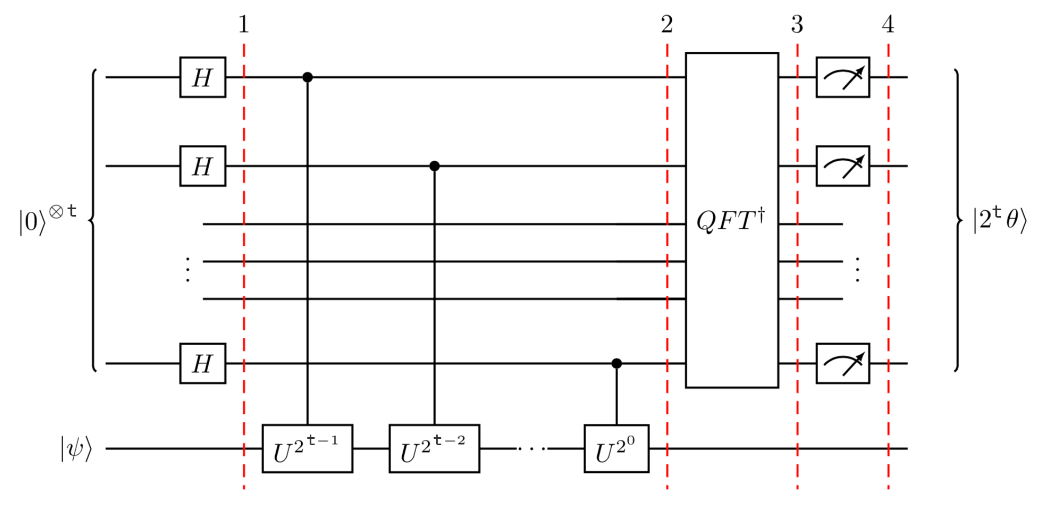}
    \vskip -0.15in
    \caption{A typical Phase Estimation Circuit, taken from~\cite{Qiskit}.}
    \label{fig:pecqis}
\end{figure}

By denoting the eigenvalues of the unitary operator $G$ as $\exp(\pm 2 \pi \iota \phi)$, we can apply the phase estimation algorithm to accurately estimate $\phi$ up to $t$ bits. Consequently, the value of the smooth classifier can be determined using the circuit depicted in Figure \ref{fig:pecqis}.

As discussed by \citet{bojchevski2023efficient}, obtaining a lower bound (or an upper bound when the output is below 0.5) on the probability that the smooth classifier predicts class $1$ is crucial for ensuring the certified robustness guarantees hold, irrespective of the base classifier selected. \citet{Brassard2000QuantumAA} previously analyzed to estimate the bounds for the absolute difference between the actual and predicted amplitudes, specifically, $|sin^2(\pi \phi_{actual}) - sin^2(\pi \phi_{approx})|$. In principle, the aforementioned error bound can be employed by subtracting it from the approximated value to acquire the lower bound. Nonetheless, utilizing this uniform error bound (independent of the actual value) might prove excessively conservative, particularly when working with a smaller number of qubits. As seen in our experiments(\cref{experiments}), such deviations from the actual value could lead the algorithm to disregard potential robustness. 

Essentially, the phase estimation algorithm, contingent upon the number of counting qubits, can yield a finite set of distinct output values ($\sin^2(\theta/2)$). These values partition the interval $[0,1]$ into a series of subintervals, denoted by $\{[a_j, a_{j+1}]\}_{j=1, ..., 2^{n-1}}$. Given that the second register in Figure \ref{fig:pecqis} is initialized with $\ket{\psi(\vect x)}$, representing an equal superposition of the eigenstates of the operator $G$, for every observed phase $\omega$, a similar number of observations is expected for $1 - \omega$ (except when both the eigenstates coincide). Nevertheless, it is crucial to acknowledge that both values produce identical outputs. As illustrated in~\cite{Brassard2000QuantumAA}, with a probability of at least $\frac{8}{\pi^2}$, boundary values of the interval $[a_{j^*}, a_{j^* + 1}]$ is obtained, such that $g(\vect x) \in [a_{j^*}, a_{j^* + 1}]$. Consequently, if a state corresponding to $a_i$ is observed, $a_{i-1}$ is a high-probability guaranteed lower bound. This probability is further enhanced by repeating the experiment multiple times. 

\textbf{Intutive understanding of \cref{lemma:convergence_quantum}:}
From our discussion above we notice that the error in estimating the precise value may diminish exponentially as the number of qubits increases, due to the exponential growth in the number of intervals. Nonetheless, the number of calls to the oracle (through Grover's operator G) also experiences exponential growth. As a result, a linear decrease in the error can be anticipated with the number of calls to the oracle.

\subsection{Proofs}
\label{proofs}
\subsubsection{Parameterized distribution loader circuit}\label{app:parametrized_loader_circuit}
\begin{proof}[Proof of Lemma \ref{lemma:dist_load}]
Here we that $U(x, p_-, p_+)$ creates the desired superposition state upto a global phase, and the global phase introduced in the process does not change the resulting Grover's operator.

Starting with the first part, decompose $U(x, p_-, p_+)$ as two operators $DL(x; p_-, p_+): = \otimes_{m=0}^{n-1} RY(\theta_0 + x_m \theta_D)$ and $D(x):= \otimes_{m=0}^{n-1} RX(x_m \pi)$. $D(x)$ loads the data point $\ket{x}$ using basis encoding and $DL(x; p_-, p_+)$
further creates the desired superposition from $\ket{x}$ based on the distribution $\mu_x$ as shown below,
$D(x)$ applies $RX(0) = \mathbb I $ to the $i^{th}$ qubit if $x_i = 0$ else apply $RX(\pi) = -\iota X$ if $x_i = 1$, therefore, $D(x)\ket{0} = (-\iota)^{\sum_{j=0}^{n-1} x_j}\ket{x}$.
Denote, $DL_M(x_M; p_-, p_+): = RY(\theta_0 + x_M \theta_D)$ and note that,
\begin{equation}
    \begin{split}
        DL_M(0; p_-, p_+) \ket{0} &= RY(\theta_0)\ket{0} \\
        &= \sqrt{1-p_+} \ket{0} + \sqrt{p_+} \ket{1} \\
        DL_M(1; p_-, p_+) \ket{1} &= RY(\theta_0 + \theta_D)\ket{1} \\
        &= \sqrt{p_-} \ket{0} + \sqrt{1 - p_-} \ket{1}.
    \end{split}
\end{equation} 
Therefore,
\begin{equation}
\label{eq:circ}
\begin{split}
     DL_M(x_M;p_{-}, p_{+})\ket{x_M} = (\sqrt{P_F^{(M)}} \ket{\bar{x}_M} + \sqrt{1-P_F^{(M)}} \ket{{x_M}})  
\end{split}    
\end{equation}
where, as discussed before $P_F^{(m)} := (p_+)^{1-x_m}(p_-)^{x_m}$. Using Equation \ref{eq:circ} and the fact that $P(i|x_m) = (\sqrt{P_F^{(m)}}^{\mathbb I(i \ne x_m)} \sqrt{1-P_F^{(m)}}^{\mathbb I(i = x_m)})$, 

\begin{equation}
    \begin{split}
        DL(x; p_{-}, p_{-}) \ket{x} &= \otimes_{m=0}^{n-1} (\sqrt{P_F^{(m)}} \ket{\bar{x}_m} + \sqrt{1-P_F^{(m)}} \ket{{x_m}})\\ 
        &= \otimes_{m=0}^{n-1} \sum_{i=0}^{1}(\sqrt{P(i|x_m)} \ket{i} \\
        &= \sum_{i_0, ..., i_{n-1} = 0}^{1} \Pi_{m=0}^{n-1} \sqrt{P(i_m|x_m)}\otimes_{m=0}^{n-1}  \ket{i_m} \\
        &= \sum_{i=0}^{2^n-1} \sqrt{P(i|x)} \ket{i} = \sum_{i=0}^{2^n-1} \sqrt{\mu_x(i)} \ket{i}
    \end{split}
\end{equation}
Therefore, as desired $DL(x; p_-, p_+) \ket{x} =  \ket{\psi(x)} := \sum_{i=0}^{2^n-1} \sqrt{\mu_x(i)} \ket{i}$ and $U(x, p_-, p_+) \ket{0} = (-\iota)^{\sum_{j=0}^{n-1}x_j} \ket{\psi(x)}$. 
Moreover, consider $ G = (-\iota \iota)^{\sum_{j=0}^{n-1} x_j} (2 \ket{\psi(x)}\bra{\psi(x)} - \mathbb I_{2^n})O = \mathcal{R}_{\ket{\psi}_x} O_f$. Hence, the global phase does not change the final Grover's operator.

\end{proof}

\subsubsection{Asymptotic analysis and Quadratic speedup}
\begin{proof}[Proof of Lemma \ref{lemma:convergence_quantum}]
Let $a$ denote the output of the exact smooth classifier and $\tilde a$ be the output of SQC(...). From the analysis of Quantum Amplitude Estimation by \cite{Brassard2000QuantumAA} we know the following results, for $t$ counting qubits and $T:=2^t$,

\begin{equation}
\label{eq:app:bound}
|\tilde a - a| \le \frac{2\pi \sqrt{a(1-a)}}{T} + \frac{ \pi^2}{T^2} \le \frac{\pi }{T} + \frac{ \pi^2}{T^2}
\end{equation}

with probability $p_0 \ge \frac{8}{\pi^2}$. Number of calls to the oracle is also $T$; $T-1$ calls in the controlled Grover's operation and $1$ in eigenstate preparation. Therefore if $\frac{\pi }{T} + \frac{ \pi^2}{T^2} < \epsilon \implies |a - \tilde a| < \epsilon$. Hence $T = O(\frac{1}{\epsilon})$ calls are required to get the correct estimate upto a bound of $\epsilon$ with a probability of $\frac{8}{\pi^2}$. 

\textbf{Using median estimate to boost the success probability:}

The motivation behind using the median estimate is that in order for the median to fail the experiment must fail more than half of the times therefore as the number of samples increase the probability of failure decreases exponentially.

Let $X$ denote the random variable corresponding to the success of the event that upon using $T$ evaluations of the base classifier in the phase estimation algorithm the error bound in Equation \ref{eq:app:bound} is achieved. Clearly $X$ is Bernoulli with $p_0 $ (i.e. $X \sim B(p_0)$). Let $Y = \sum_{i=1}^{N}X_i$, i.e. $Y \sim Binomial(N,p_0)$ then the probability that that the median fails to satisfy the bound is given by:
\begin{equation}
\label{eq:app:binom}
P[|\tilde a - a| > \frac{\pi}{T} + \frac{\pi^2}{T^2}] \le P[Y \le \frac{N}{2}]
\end{equation}

Using the Chernoff’s bound for binomial, 
\begin{equation}
P[Y \le (1-\delta)\mu] \le e^{-\frac{\delta^2\mu}{2}},    
\end{equation}
choose $\delta = 1 - \frac{1}{2p_0}$ to get a bound for Equation \ref{eq:app:binom}: 
\begin{equation}
P[|\tilde a - a| > \frac{\pi}{T} + \frac{\pi^2}{T^2}] \le P[Y \le \frac{N}{2}]
\le \exp(-\frac{(1-\frac{\pi^2}{16})^2N(\frac{8}{\pi^2}))}{2})
\end{equation}  

Therefore, choose $N = \lceil 17 \ln(\frac{1}{\delta}) \rceil$ to have a certainty of $1-\delta$  by which the median estimate is within the desired bounds. Overall, to find the estimate of $a$ within the tolerance of $\epsilon$  with a probability of $1-\alpha$ , the quantum phase estimation algorithm requires $O(\frac{1}{\epsilon}\ln(\frac{1}{\alpha}))$ calls to the base classifier.

\end{proof}

\begin{proof}[Proof of Theorem \ref{thm:main}]
For quadratic speedup, we just need to show that the classical method requires $O(\ln(\frac{1}{\delta})\frac{1}{\epsilon^2})$ calls to the oracle to get an estimate within a bound of $\epsilon$ with probability $1 - \delta$. This can be done using Theorem $2$ of \cite{BinomialConfidence}. For $N$ number of evaluations of the base classifier the expected length of the Clopper-Pearson interval with probability $1-\delta$ can be written as 
\begin{equation}
   \begin{split}
        \epsilon &= \mathbb E[L_{cp}]
        = 2z_{\delta/2}N^{-1/2}(p(1-p)^{1/2}) + O(N^{-1}) \\
        &= \Omega(\sqrt{\ln(\frac{1}{\delta})}N^{-1/2}) 
\end{split}
\end{equation}

where $z_{\alpha/2}$ is the upper $\alpha /2$ quantile of the normal distribution and the for the last asymptotic equality we use the asymptotic behaviour of the tail of the Gaussian. Therefore to obtain with a probability $1-\delta$, an estimate $\tilde g(x)$ of the smooth classifier $g(x)$ such that $|\tilde g(x) - g(x)| < \epsilon$ we need $O(\ln(\frac{1}{\delta})\frac{1}{\epsilon^2})$ calls to the classifier.

\end{proof}

\subsection{Further details on experiments}
\label{extra_experiments}
\subsubsection{Binary-MNIST Classification}
\label{app_bin_MNSIT}
To get the base classifier we train a convolution neural network~\cite{cnn_lecun} coupled with adversarial training to solve the multi-class classification task by minimizing the cross-entropy loss. For adversarial training, we sample 128 perturbed images \cite{bojchevski2023efficient} and take the empirical mean of all the perturbed predictions as an estimate of the smooth classifier from Equation \ref{eq:drs_smoothclass}. We choose the smoothing defined by $p_{+} = p_{-} = 0.2$ for training the network. Note that, although the certification works for any model, we still use adversarial training to ensure that the resulting smooth classifier performs well. It is not essential to use the same smoothing parameters to construct the smooth classifier. The data space is represented as $\mathcal X := \{0,1\}^{16 \times 16}$, and the window of perturbation is $\Pi :\{0, 1, ..., 16\} \mapsto \{0, 1, ..., 15\} \times \{0, 1, ..., 15\}$, i.e., $\Pi(i)$ gives the index of the $i^{th}$ pixel of the window. The set $\Pi$ here corresponds to the set $\mathcal I_N$ in \cref{discrete_perturbations}. We define the perturbation model $p(x_i, i)$ as $p(x_i, i) := \overline{x_i}$. i.e. independent of the location of the pixel $1$ can be perturbed to $0$ and vice-versa. This allows us to define the perturbation space with $2^{17}$ perturbations. Further, we define the data-dependent probability $q(x_i, i) := p_+ \mathbf{1}[x_i = 0] + p_- \mathbf{1}[x_i = 1]$. The characterization of $p(x_i, i)$ and $q(x_i, i)$ allows us to map this problem to the framework of \cref{discrete_perturbations}.

\subsubsection{Graphs Classification}
Here, we compare the convergence rates of classical and quantum smooth classifiers. For the graph classification task, if the quantum algorithm requires N samples, the classical algorithm requires $\mathcal O(N1.78)$ samples, as shown in \cref{fig:Convergence_Graphs}.

\begin{figure}[t]
\vskip 0.2in
\begin{center}
\centerline{\includegraphics[width=0.75\columnwidth]{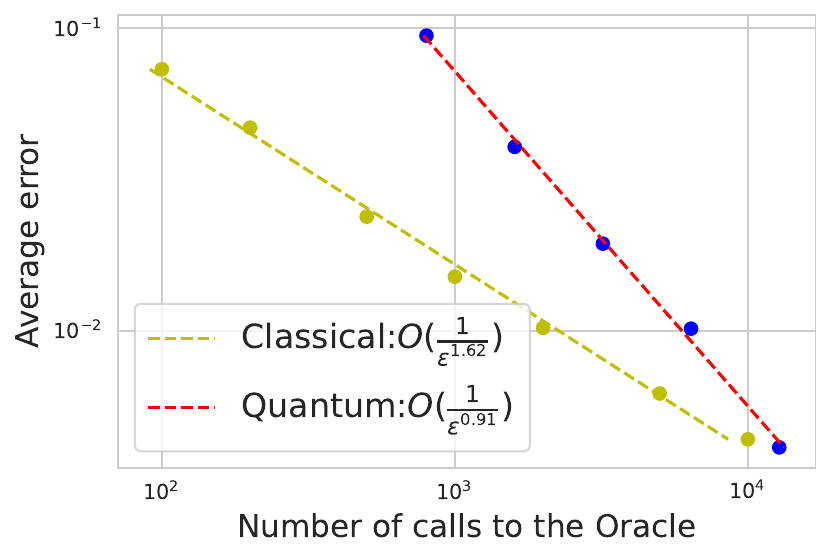}}
\caption{Convergence of error with number of calls to the oracle. Comparison for \textbf{Classical} vs \textbf{Quantum} algorithm with errors averaged over     $170$ graphs from the Graph classification test case.}
\label{fig:Convergence_Graphs}

\end{center}
\vskip -0.2in
\end{figure}

\subsubsection{Text Classification}\label{app:textclassification}
Let $\vect x$ denote the list of input tokens, and $\Phi$ the removal of a token. Further, $\mathcal I$ denotes the indices of the tokens corresponding to stop words. For $i \in \mathcal I $, define, $p(x_i, i) = \Phi$ and $q(x_i, i) = 0$ if $x_i = \Phi$, $q(x_i, i) = p_{rem}$ otherwise. Essentially, given $\vect x$, we map it to the binary vector of length $\lvert \mathcal I \rvert$ with all $0$'s and change $0$ to $1$ whenever a stop word is removed. For the smoothing distribution we allow $0$ to change to $1$ with $p_+ = p_{rem}$ and $1$ to $0$ with $p_- = 0$.  
We show the certified ratio for the exact and the quantum classifier with $p_{rem} = 0.5$ in \cref{fig:text_certificate_5}.
\begin{figure}[h!]
\vskip 0.2in
\begin{center}
\centerline{\includegraphics[width=0.75\columnwidth]{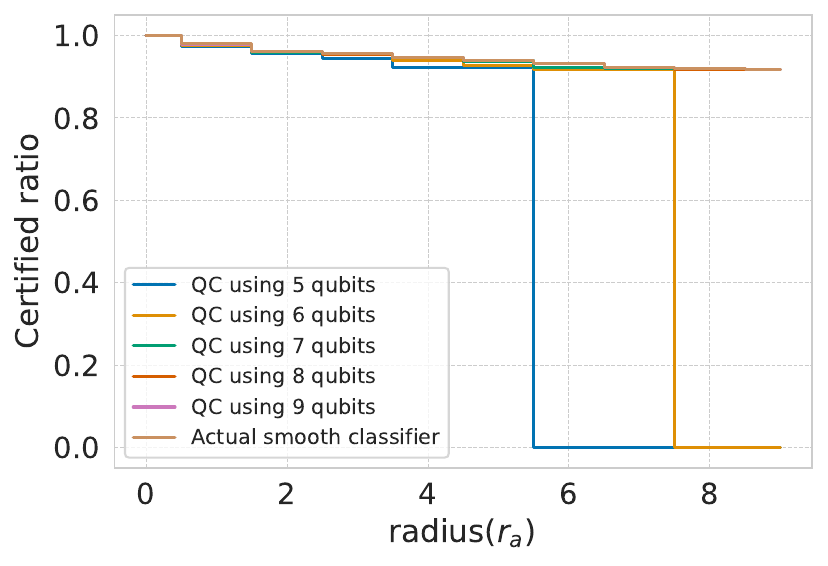}}
\caption{Certified ratio, exact classifier vs quantum classifier ($p_{\text{rem}} = 0.5$) using $5,6,7,8,9$ counting qubits evaluated for $600$ ($300$ for each polarity) randomly selected reviews with $8-10$ neutral words.}
\label{fig:text_certificate_5}
\end{center}
\vskip -0.2in
\end{figure}

\printbibliography

\end{document}